\newcommand{\taninv}{\tan^{-1}}
\newcommand{\no}{\noindent}
\newcommand{\mc}[1]{\mathcal{#1}}
\newcommand{\bb}[1]{\mathbb{#1}}
\newtheorem{prop}{Proposition}[section]
\DeclarePairedDelimiter\abs{\lvert}{\rvert}%
\newcommand\Tstrut{\rule{0pt}{2.6ex}}         
\title{\LARGE \bf
Frozone: Freezing-Free, Pedestrian-Friendly Navigation in Human Crowds
}
\author{Adarsh Jagan Sathyamoorthy, Utsav Patel, Tianrui Guan, and Dinesh Manocha
}
\begin{document}

\maketitle
\thispagestyle{empty}
\pagestyle{empty}

\begin{abstract}
We present Frozone, a novel algorithm to deal with the  Freezing Robot Problem (FRP) that arises when a robot navigates through dense scenarios and crowds. Our method senses and explicitly predicts the trajectories of pedestrians and constructs a Potential Freezing Zone (PFZ); a spatial zone where the robot could freeze or be obtrusive to humans. Our formulation computes a deviation velocity to avoid the PFZ, which also accounts for social constraints. Furthermore,  Frozone is designed for robots equipped with sensors with a limited sensing range and field of view. We ensure that the robot's deviation is bounded, thus avoiding sudden angular motion which could lead to the loss of perception data of the surrounding obstacles. We have combined Frozone with a Deep Reinforcement Learning-based (DRL) collision avoidance method and use our hybrid approach to handle crowds of varying densities. Our overall approach results in smooth and collision-free navigation in dense environments. We have evaluated our method's performance in simulation and on real differential drive robots in challenging indoor scenarios. We highlight the benefits of our approach over prior methods in terms of success rates (upto 50 \% increase), pedestrian-friendliness (100 \% increase) and the rate of freezing ( $> 80 \%$ decrease) in challenging scenarios.

\end{abstract}

\section{Introduction}
Mobile robots are increasingly used in many indoor scenarios. This includes applications such as waiters in hotels, as helpers in hospitals, as transporters of goods in warehouses, for surveillance, package delivery etc. To accomplish such tasks, these robots need to navigate through dense and challenging dynamic environments, specifically in crowds with pedestrian densities ranging from $<$ 1 to 2 persons/$m^2$. Apart from avoiding collisions with static and dynamic obstacles in its surroundings, the robot should also navigate in a pedestrian-friendly way. The latter includes satisfying social constraints \cite{fundamental-diag,sociosense,socially-aware}, such as maintaining sufficient distance from the pedestrians and other rules corresponding to avoiding them from behind (See Fig.1 Bottom).

A challenging problem that a robot could face in such scenarios is the Freezing Robot Problem (FRP) \cite{freezing1,freezing2}. FRP occurs when the robot faces a situation where the collision avoidance module declares that all possible velocities may lead to collisions. The robot either halts or starts oscillating indefinitely, which could either result in a collision or it does not make progress towards its goal. In practice, it is non-trivial to completely avoid FRP in crowds beyond a certain density without human cooperation \cite{freezing2}. One of the goals is to develop approaches that can reduce the occurrence of FRP in moderately dense crowds ($\le 1$ person/$m^2$) without assuming human cooperation, and using limited sensing capabilities.

There have been a few works addressing FRP \cite{freezing1,freezing2,rudenko2017Predictions,densecavoid}. Some approaches have also attempted to solve the problems of freezing and loss of localization in a crowd simultaneously \cite{unfrozen_Tingxiang}. While these methods are promising, we need more general solutions which rely less on global information and can provide some guarantees on the resulting performance. 

In order to address freezing, it is important to compute a collision-free trajectory for the robot based on local knowledge of the environment from its sensor data. The problen of collision avoidance is well studied and some of the widely used methods are based on velocity obstacles \cite{RVO,ORCA,NH-ORCA}, the dynamic window approach~\cite{DWA}, etc. More recently, many techniques have been proposed based on Deep Reinforcement Learning (DRL) collision avoidance \cite{Alahi,JHow1,JHow2,JiaPan1,WB1}. Decentralized DRL-based methods have gained popularity due to their superior performance in terms of success rates, average robot velocity, etc compared to traditional methods, and their ability to handle large numbers of dynamic obstacles, and can be robust to sensor uncertainty. However, it is hard to provide any guarantees.

\begin{figure}[t]
      \centering
      \includegraphics[width = 3.2in, height=2.9in]{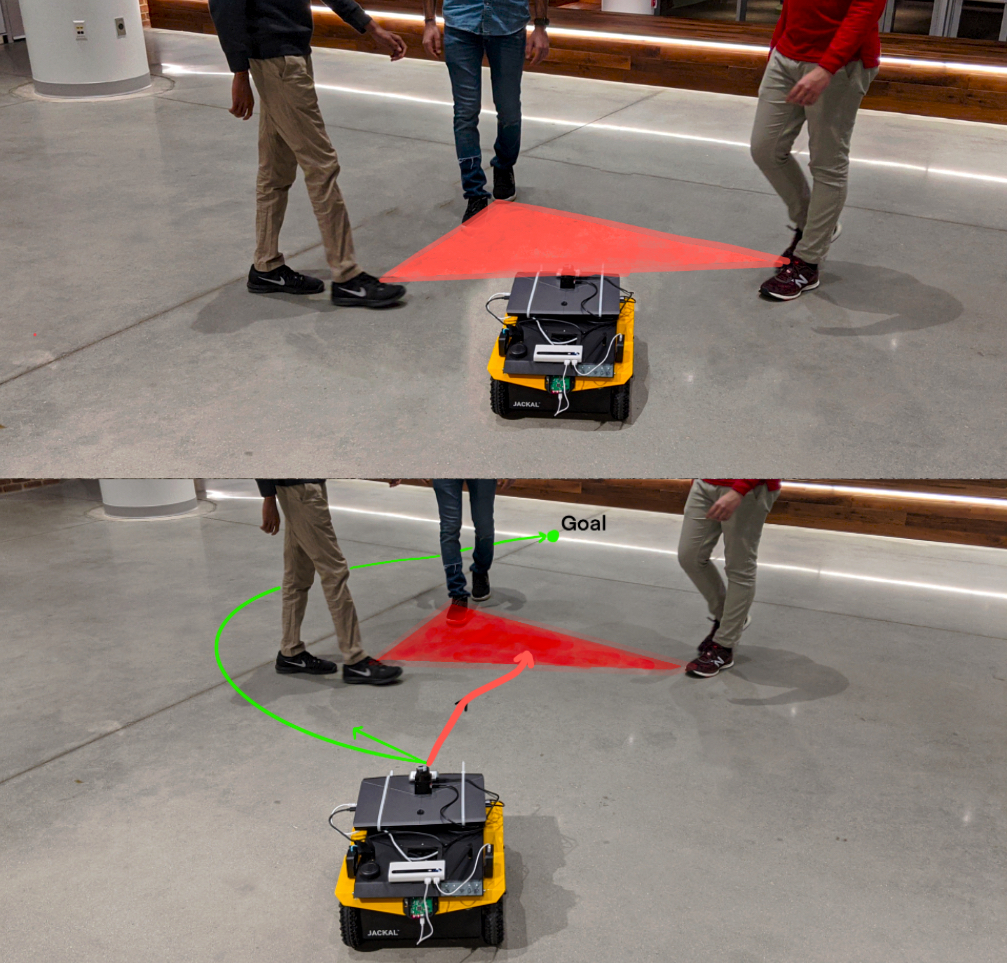}
      \caption {\textbf{[Top]} An instance of the Freezing Robot Problem (FRP), where the robot halts or oscillates indefinitely when it faces scenarios with pedestrians, as shown (highlighted in red). \textbf{[Bottom]} Another scenario where our approach implemented on a Clearpath Jackal robot navigates amongst pedestrians and preemptively avoids FRP. Our method explicitly tracks and predicts pedestrians' positions in the sensing range of the robot and classifies each pedestrian as \textit{potentially-freezing} or \textit{non-freezing}. We compute a \textit{potential} freezing zone (PFZ) (shown in red) and our method deviates the robot away from the freezing zone (green trajectory), while previous methods move the robot towards that zone (red trajectory).}
      \label{Cover}
      \vspace{-15pt}
\end{figure}



\textbf{Main Contributions:}

\begin{itemize}
    \item We present Frozone, a real-time algorithm that significantly reduces the occurrence of FRP by explicitly predicting  pedestrian trajectories, classifying them as \textit{potentially-freezing} or \textit{non-freezing} pedestrians and constructing a Potential Freezing Zone (PFZ). PFZ corresponds to a conservative spatial zone where the robot might freeze and be obtrusive to humans. Our method calculates a deviation to modify the robot's velocity  to avoid the PFZ leading to a decrease of more than 80\% in freezing rates over prior algorithms.
    
    \item Our method ensures that the robot's velocity avoids the PFZ, and is unobtrusive to the nearby pedestrians based on their social and psychological constraints for personal space. We observe an improvement in the pedestrian-friendliness of the robot's trajectories by 100\%. 
    
    
    \item We combine Frozone with a state-of-the-art DRL-based collision avoidance method and present a hybrid navigation algorithm that combines the benefits of traditional model-based algorithms such as better guarantees, and DRL-based approaches like better robustness to sensor uncertainty.
\end{itemize}

We evaluate our method in simulation and on a Clearpath Jackal (Fig. \ref{Cover}) and a Turtlebot (Fig. \ref{fig:turtlebot}) in several challenging indoor scenarios with crowds of varying densities ($<$ 1 person/$m^2$ to $> 2$ persons/$m^2$). 

\begin{figure}[t]
      \centering
      \includegraphics[width=\columnwidth]{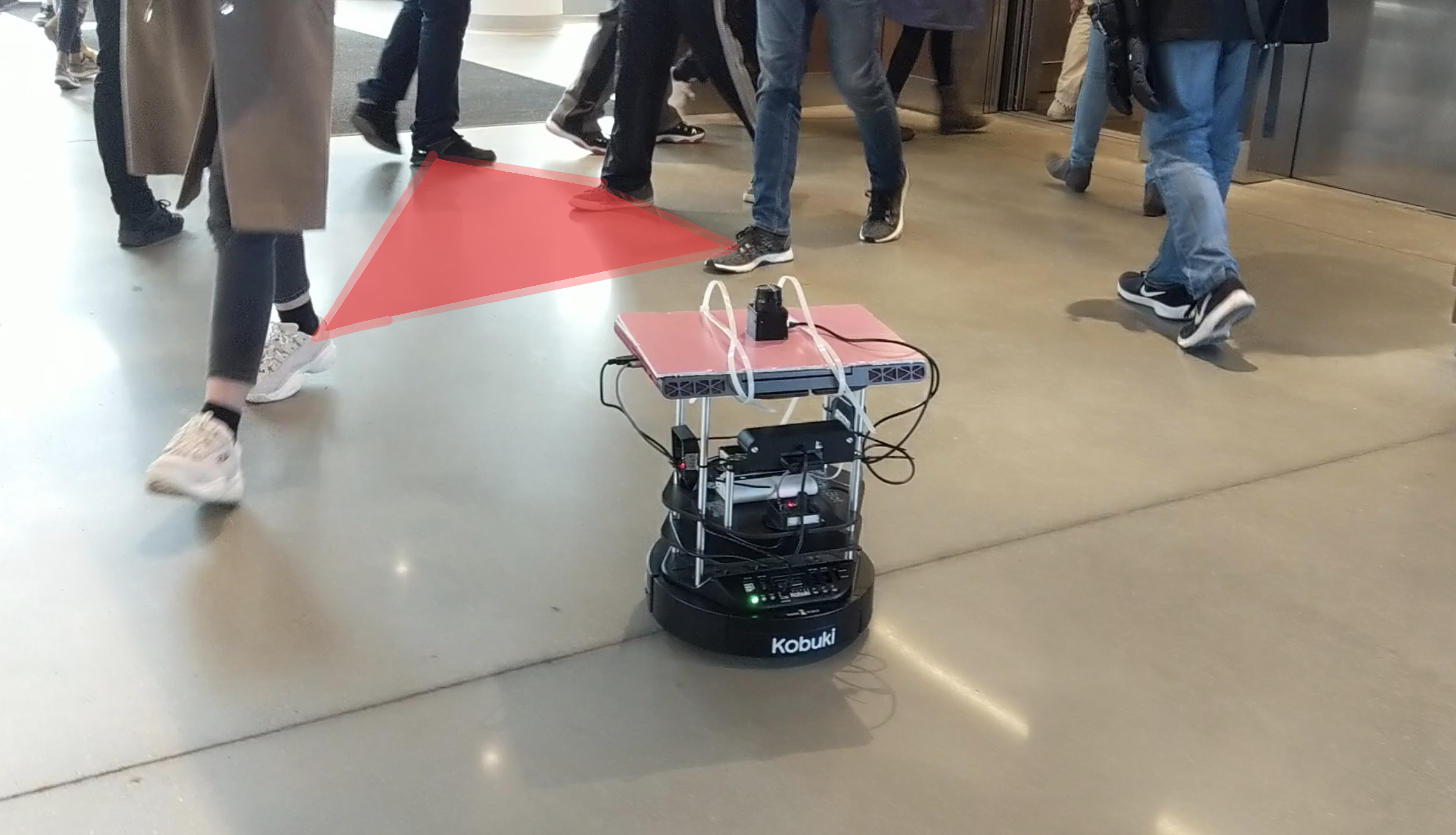}
      \caption {Frozone tested on a Turtlebot in dense crowds. Our tests demonstrate the applicability of our method on different mobile robots with limited sensing capabilities equipped with different perception sensors. The Potential Freezing Zone is represented in red.}
      \label{fig:turtlebot}
      \vspace{-15pt}
\end{figure}

\section{Related Work}
We discuss the relevant previous work on the Freezing Robot Problem, DRL-based collision avoidance methods, and socially-aware navigation.

\subsection{Freezing Robot Problem}
The earliest works addressing FRP \cite{freezing1,freezing2} argue that it can only be solved by accounting for human-robot cooperation~\cite{case4cooperation} in terms of robot navigation, meaning that the robot and the humans need to adjust their trajectories. These methods also show that even perfect pedestrian trajectory prediction would not help in solving FRP, unless human-robot cooperation is explicitly modeled. Other attempts at solving FRP include a method based on learning from demonstration \cite{YuWang_Learch} and improved motion prediction based on the Markov Decision Process \cite{rudenko2017Predictions}. The resulting algorithms use static CCTV cameras or simulations to validate their methods and do not fully account for many practical issues that arise on a real robot such as the loss of sensing data due to the robot's motion. Reducing FRP using a trained deep learning policy using implicit pedestrian prediction \cite{densecavoid} has also been investigated. Our approach is complimentary to these methods and is more robust in terms of reducing FRP.

\subsection{Deep Reinforcement Learning-based Collision Avoidance}
In recent years, methods based on DRL have been used for navigation in dense scenes. Some methods have had great success with training a decentralized collision avoidance policy \cite{JiaPan1} and combining it with traditional control strategies \cite{JiaPan2}. The trained DRL policy has been extended to solve the loss of localization and FRP simultaneously by learning recovery points \cite{unfrozen_Tingxiang}. While these DRL methods model cooperative behaviors between humans and robots implicitly, other methods explicitly model cooperation using a value network with two agents \cite{JHow1} or an arbitrary number of agents \cite{JHow2}. Intra-crowd interactions, which could indirectly affect a robot's navigation, have also been studied \cite{Alahi}. DRL-based methods have also been extended to navigate in a  socially acceptable manner \cite{socially-aware}, providing safety guarantees by identifying previously unseen scenarios and performing more cautious maneuvers \cite{JHow-uncertainty}. Our hybrid algorithm can also be combined with these DRL-based approaches.

\subsection{Socially Aware Navigation}
Robots navigating among pedestrians should reduce the amount of discomfort to the surrounding humans. Prior work in crowd or pedestrian simulation provides insights on the psychological and environmental factors that affect the pedestrian's motion or trajectory~\cite{fundamental-diag,densesense}, which should be accounted during robot navigation. Works on socially aware robot navigation include predicting the long-term trajectories of pedestrians using personality trait theory and Bayesian learning~\cite{sociosense} and classifying group emotions~\cite{entitivity}. Learning-based methods can be used to account for social norms by specifying the behaviors that should be avoided \cite{socially-aware-Jon-How}. In our work, we use these ideas and focus on computing robot velocities that maintain a comfortable distance in-front of pedestrians.

\section{Background and Overview}
We define the notation used in our approach and provide some background on the different components used. We also discuss the factors that affect pedestrian motion in a crowd and use them to model the social constraints.

\subsection{Notation and Symbols}
We represent each pedestrian as $[\textbf{p}^{ped}, \textbf{u}^{ped}] = [p^{ped}_{x}, p^{ped}_{y}, u^{ped}_{x}, u^{ped}_{y}] \in \mathbb{R}^4$, where $\textbf{p}^{ped}$ and $\textbf{u}^{ped}$ represent the 2-D position and unit vector representing the front/forward direction of the pedestrian, respectively. We assume that the robot knows its relative goal location and denote it as $\textbf{g}^{rob}$. All values are specified \textit{with respect to a coordinate frame attached to the robot,} with an origin denoted by $\textbf{o}^{rob}$ as shown in Fig. \ref{freezing-scenario}(a). Therefore, the forward heading direction of the robot always corresponds to (1 $\hat{i} + 0\hat{j}$) and the direction towards the left is represented as (0 $\hat{i} + 1\hat{j}$). We represent scalar values in normal fonts and vectors using bold fonts.

We assume that the robot is equipped with a depth camera to sense nearby pedestrians. Its sensing region $\mathcal{S}^{rob}$ is formulated as a square space with a side length of \textit{$s_{sen}$} meters in front of the robot (see Fig.\ref{freezing-scenario}(a)). Our approach can be easily modified for other sensing regions. The sensing region is offset in front of the robot by \textit{f} meters to account for sensing inaccuracies in the depth image, which could arise for objects too close to the robot. The depth image \textit{I} at any time instant \textit{t}, and the value of a pixel at $(i, j)$ which contains the proximity of an object at that part of the image, are represented as:
\begin{equation}
\begin{split}
    I^t &= \{C \in \mathbb{R}^{w \times h} : f < C_{ij} < f + s_{sen}\}, \\
    1 &\le i \le w \qquad \text{and} \qquad 1 \le j \le h,  
\end{split}
    \label{cam}
\end{equation}
\no where \textit{w}, \textit{h}, \textit{i} and \textit{j} are the image's width, height, and the indices along the width and height, respectively. We define \textit{dist(\textbf{a}, \textbf{b})} as a function that computes the Euclidean distance between points \textbf{a} and \textbf{b}.

\subsection{Pedestrian Behavior in crowds}
Pedestrian motions in crowds are influenced by several factors such as crowd density, individual stride length, and need for personal space. Pedestrians tend to walk slower when there is less space in front of them (i.e., dense crowd). The fundamental diagram~\cite{fundamental-diag} is used to model this behavior through an inverse relationship between pedestrian velocities and the crowd density.

A pedestrian's natural walking speed ($v^{ped}$) is related to physiological (pedestrian's height and stride length) and psychological (need for personal space) factors \cite{fundamental-diag} using the following equation: 
\begin{equation}
\resizebox{15em}{!}{
    $v^{ped} = min\left(||\Vec{v}^{pref}||, \left(\frac{S\alpha}{H(1 + \beta)}\right)^2\right)$}
    \label{ped-walking-speed}
\end{equation}
where $||\Vec{v}^{pref}||$ is the pedestrian's preferred speed directed towards its goal, which we assume to be 1.3 meters/second (on average). S is the available space in front of the pedestrian, H (height/1.72) is a height normalization factor, and $\alpha$ and $\beta$ are constants that account for the stride of the pedestrian. We assume that all the pedestrians have the same values for H, $\alpha$ and $\beta$. We use the velocity relationship in Equation \ref{ped-walking-speed} to predict the future positions of pedestrians after a time $\Delta t$. This formulation is complimentary to existing trajectory prediction methods, and also accounts for human psychological need for space while walking. From Equation \ref{ped-walking-speed}, we infer that a pedestrian's motion is unaffected when a robot avoids/passes them from behind or if it leaves sufficient space in front of them (high value of S). 

Therefore, in our approach, we compute a deviation for the robot velocity such that the robot maintains sufficient distance in-front of the pedestrians, and avoids them from behind when possible. We define $\eta$ as the pedestrians' comfortable distance threshold based on social constraints, i.e. the distance that the robot needs to maintain from a pedestrian whenever possible. In addition, we define the \textit{pedestrian-friendliness} \textit{(PF)} of the robot as, 

\begin{equation}
    PF = Z*N_{\infty} + \Bar{Z}*min(dist(\textbf{o}^{rob}, \textbf{p}^{ped}_i)),    
    \label{eq:PF}
\end{equation}
\no where Z is a boolean that represents if a pedestrian has been avoided from behind, $N_{\infty}$ is a large value, and $\textbf{p}^{ped}_i$ represents the position of the $i^{th}$ pedestrian in the environment. 

\subsection{Pedestrian Detection and Tracking}
Object detection and tracking are used to locate the pedestrians in the image from the depth camera and mark their trajectories accurately. These problems are well studied in computer vision and robotics literature, and recently, good solutions are proposed based on deep learning. We use a pre-trained YOLOv3 \cite{redmon2018yolov3} model for pedestrian detection in the depth images. YOLOv3 outputs a set of bounding boxes for the detected pedestrians as $\mc{B}= \{ \bb{B}_{k} \ | \ \bb{B} = [\textrm{top left}, m_{\bb{B}}, n_{\bb{B}}],  \in \mc{H} \}$, where $\textrm{top left}, m_{\bb{B}},$ and $n_{\bb{B}}$ denote the top left corner coordinates, width, and height of the $k^{th}$ bounding box $\bb{B}_k$, respectively. $\mc{H}$ denotes the set of all pedestrian detections. In addition, it also assigns an ID for a given image and reports the detection confidence. 

The detection bounding boxes are used as inputs to DensePeds~\cite{chandra2019densepeds}, a state-of-the-art pedestrian tracking algorithm that assigns a unique ID to each detected pedestrian across multiple consecutive images. This ID is used to compute each pedestrian's position in the image over time. We use DensePeds as it is robust to the noise in the image and performs with an accuracy of over 93\% in dense ($> 1$ person/$m^2$) scenarios.

\subsection{DRL-based Collision Avoidance}
As mentioned previously, we interface an end-to-end DRL-based collision avoidance policy \cite{JiaPan1} with Frozone. The policy uses observations from a 2-D lidar ($\textbf{o}^t_{lidar}$), the robot's relative goal location ($\textbf{o}^t_{goal}$), and its current velocity ($\textbf{o}^t_{vel}$) to compute new collision avoiding velocities at each time instant. During training, a reward/penalty function is shaped to: (i) minimize the robot's time to reach its goal, (ii) reduce oscillatory motions in the robot, (iii) head towards the robot's goal, and most importantly, (iv) avoids collisions. Post training, a collision free velocity $\textbf{v}^{DRL}$ at each time instant is sampled from a trained policy $\pi_{\theta}$ as: 
\begin{equation}
    \textbf{v}^{DRL} \sim \pi_{\theta}(\textbf{\textbf{v}}^t | \textbf{o}^t),
\end{equation}
\no where $\textbf{v}^t$ and $\textbf{o}^t$ are the velocity and the observation spaces at time t, respectively.


\section{Our Method: Frozone}
In this section, we describe how we formulate the freezing robot problem. This includes various components including pedestrian classification, the construction of a potential freezing zone, and our formulation to calculate the deviation needed to avoid the freezing zone.

\begin{figure*}[t]
      \centering
      \includegraphics[width = \textwidth, height = 2.6in]{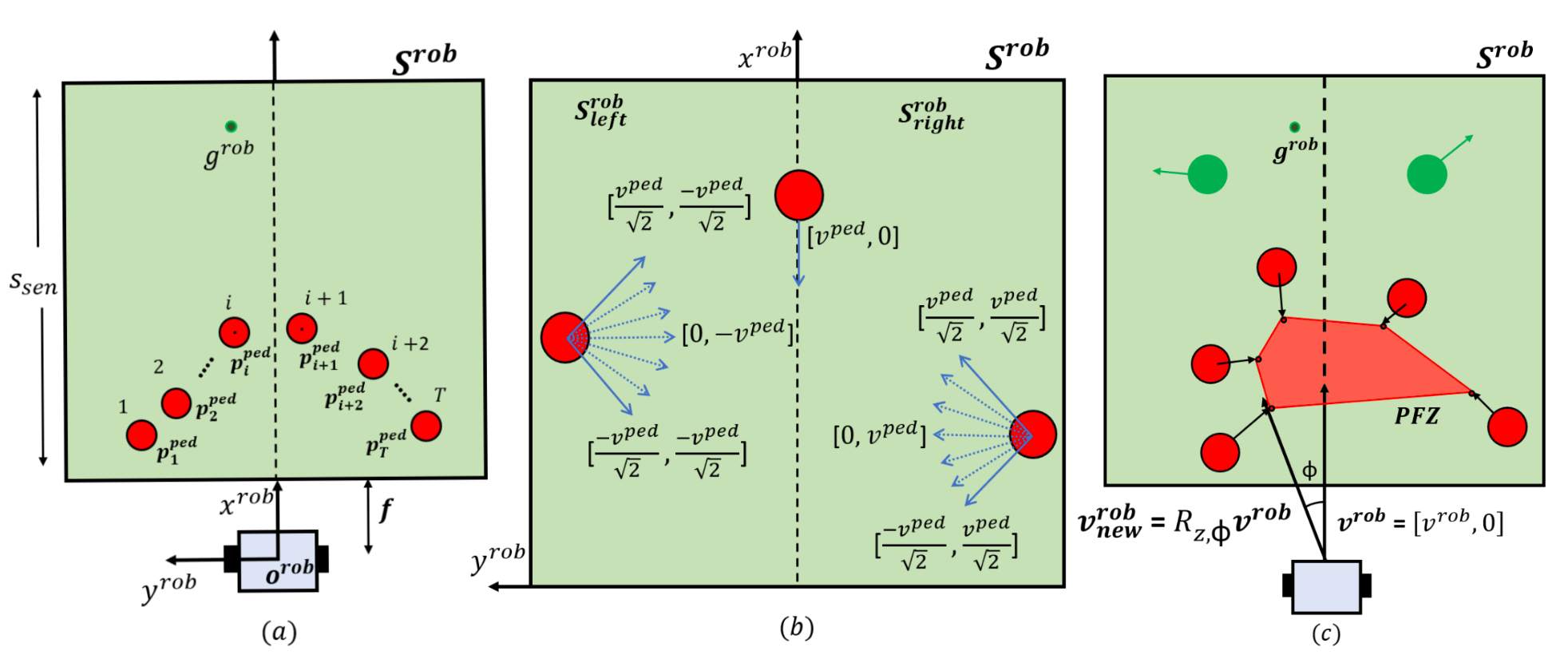}
      \caption {\textbf{(a)} An example for the freezing robot scenario. The robot does not find enough space between pedestrians (red circles) to move towards its goal and the robot's navigation module concludes that all velocities towards the goal would lead to a collision. \textbf{(b)} The range of pedestrian velocity directions that are considered as \textit{potentially-freezing} in the left and right half planes of the sensing region. Pedestrian velocities along the robot's X-axis are special cases of an either head-on approach or a pedestrian moving away. \textbf{(c)} The construction of the Potential Freezing Zone (PFZ), which is the convex hull of the predicted positions of \textit{potentially-freezing} pedestrians after time $\Delta t$. The deviation angle $\phi$ is computed such that the robot is directed away from the PFZ with the least amount of deviation from its current velocity.}
      \label{freezing-scenario}
      \vspace{-15pt}
\end{figure*}

\subsection{Formulation of FRP}
Consider a scenario as shown in Fig. \ref{freezing-scenario}(a), where a robot faces a set of \textit{T} pedestrians in its sensing region before reaching its goal. The pedestrians could either be mobile or stationary.

\textbf{Definition- IV.1} \textit{(Freezing Robot Problem)}. If the pedestrians are stationary and are positioned such that

\begin{equation}
\begin{split}
    \textit{dist}(\textbf{p}^{ped}_i, \textbf{p}^{ped}_{i + 1}) &< 2\Omega \quad \forall i \in \{1, 2, ..., \textit{T}\}, \\
    dist(\textbf{o}^{rob}, \textbf{p}^{ped}_i) &\le \Omega \quad \forall i \in \{1, 2, ..., \textit{T}\},\\
    \textbf{p}^{ped}_i, \textbf{p}^{ped}_{i + 1} &\in \mathcal{S}^{rob},
\end{split}
\label{eqn:FRP}
\end{equation}

\no where $\textbf{p}^{ped}_i$ and $\textbf{p}^{ped}_{i+1}$ are the positions of the $i^{th}$ and $(i+1)^{th}$ pedestrian with respect to the robot, and $\Omega$ denotes the minimum distance threshold that the robot's collision avoidance module should maintain with all obstacles. If the conditions in \ref{eqn:FRP} are satisfied then the planner deems all forward velocities as unsafe. Such a scenario constitutes the \textit{Freezing Robot Problem}. One possible technique (without human cooperation) for the robot to reach its goal is to retrace its path, identify the free space, and re-plan its trajectory. This typically requires global knowledge of the environment, such as a map of the environment and all dynamic obstacles, which may not readily available. The robot either halts completely or generates undesirable behaviors such as oscillations, which severely degrade the efficiency of the robot's navigation.

If the humans are non-stationary, the robot disrupts their velocities due to its low proximity in front of them (i.e. low value of S). This reduces the robot navigation's pedestrian-friendliness. Therefore, we develop an approach where the robot is able to predict if such scenarios could occur in the near future and preemptively avoid them. This simultaneously reduces the occurrence of FRP and improves the robot's pedestrian-friendliness in terms of social constraints. 

\subsection{Computing Pedestrian Poses and Prediction}
The pedestrian positions and orientations are obtained by tracking pedestrians in the image produced by the depth camera. Referring to Equation \ref{cam}, the value of the $(i, j)^{th}$ pixel in the image $C_{ij}$ contains the proximity of an obstacle present in that part of the image. When a pedestrian \textit{k} is detected and tracked in consecutive images (Section III.C), the pixel values within the detection bounding box $\bb{B}_k$ are averaged to measure the mean distance ($d_{avg}$) of the pedestrian from the camera. Let the centroid of the bounding box $\bb{B}_k$ be denoted as $[x_{\bb{B}_k}, y_{\bb{B}_k}]$. Then, the angular displacement $\psi$ of the pedestrian with respect to the robot can be calculated as:
\begin{equation}
    \psi = \left(\frac{x_{\bb{B}_k}}{w}\right) * FOV_{cam},
\end{equation}
\no where $FOV_{cam}$ is the field of view angle of the camera. The pedestrian's position with respect to the robot can be calculated as $[p^{ped}_x, p^{ped}_y]$ = $d_{avg}$ * [$\cos{\psi}, \sin{\psi}$]. A pedestrian's orientation unit vectors $[u^{ped}_x, u^{ped}_y]$ can be computed by calculating the difference between the pedestrian positions computed between two consecutive depth images. The pedestrian's perceived space in front of the pedestrian (S) can be trivially calculated based on the computed relative positions of all the pedestrians in the robot's sensing region. Using S, the pedestrian's walking speed $v^{ped}$ is calculated based on Equation \ref{ped-walking-speed}.  

\subsection{Classifying Potentially Freezing Pedestrians}
To predict if the freezing scenario discussed in Section IV.A could occur, our navigation algorithm first identifies the pedestrians in its sensing region who could cause such freezing behavior and classify them as \textit{potentially-freezing}. To this end, the sensing region $\mathcal{S}^{rob}$ of the robot is divided equally along the robot's X-axis as $\mathcal{S}^{rob}_{right}$ and $\mathcal{S}^{rob}_{left}$ as shown in Fig. \ref{freezing-scenario}(b). Consider a pedestrian in the sensing region of the robot with an orientation denoted by \textbf{u} = [$u^{ped}_{x}, u^{ped}_{y}$]. The pedestrian's velocity vector $\textbf{v}^{ped}$ with respect to the robot's frame, can be obtained by scaling \textbf{u} by the pedestrian walking speed $v^{ped}$ (from Equation \ref{ped-walking-speed}) as 
\[
\textbf{v}^{ped} = [v^{ped}_{x}, v^{ped}_{y}] = v^{ped} * [u^{ped}_{x}, u^{ped}_{y}].
\]Let the robot speed be $v^{rob}$. Therefore, its velocity vector with respect to its local coordinate frame will be $\textbf{v}^{rob} = v^{rob}*[1, 0]$. A pedestrian positioned at $\textbf{p}^{ped} \in \mathcal{S}^{rob}_{right}$ is considered as \textit{potentially-freezing} if its velocity vector $\textbf{v}^{ped}$ satisfies these constraints:
\begin{equation}
\begin{split}
    v^{ped}_{x} &\in [-v^{ped}/\sqrt{2}, v^{ped}/\sqrt{2}], \\
    v^{ped}_{y} &\in [v^{ped}/\sqrt{2}, v^{ped}]. \\
\end{split}
     \label{right-relevant-vec}
\end{equation} 
Similarly, the velocity vectors $\textbf{v}^{ped}$ of a pedestrian positioned at $\textbf{p}^{ped} \in \mathcal{S}^{rob}_{left}$ is considered \textit{potentially-freezing} if it satisfies: 
\begin{equation}
\begin{split}
    v^{ped}_{x} &\in [-v^{ped}/\sqrt{2}, v^{ped}/\sqrt{2}],\\
    v^{ped}_{y} &\in [-v^{ped}, -v^{ped}/\sqrt{2}].\\
\end{split}
    \label{left-relevant-vec}
\end{equation}
The pedestrian's speed is assumed to be comparable to the robot's speed in both cases ($v^{ped} \sim v^{rob}$).
\begin{prop}
The distance function (dist()) between any pedestrian with a velocity vector that satisfies the conditions in equations \ref{right-relevant-vec} or \ref{left-relevant-vec} and the robot, is a decreasing function with time.
\end{prop}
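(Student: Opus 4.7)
My plan is to show that the time derivative of the squared distance $\|\textbf{p}^{ped}\|^2$ is strictly negative under either set of constraints, which immediately implies that $\text{dist}(\textbf{o}^{rob}, \textbf{p}^{ped})$ is decreasing in $t$. Since all quantities are given in the robot's body frame, the pedestrian's apparent velocity is $\textbf{v}^{ped} - \textbf{v}^{rob}$, and therefore
\[
\frac{d}{dt}\|\textbf{p}^{ped}\|^2 = 2\,\textbf{p}^{ped}\cdot(\textbf{v}^{ped} - \textbf{v}^{rob}) = 2\bigl[p^{ped}_x(v^{ped}_x - v^{rob}) + p^{ped}_y\,v^{ped}_y\bigr].
\]
The strategy is to verify that each of the two summands in the brackets is non-positive, with at least one of them strictly negative.

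For the $y$-term I would split on which half-plane contains the pedestrian. If $\textbf{p}^{ped}\in\mathcal{S}^{rob}_{right}$, the paper's sign convention ($\hat{\jmath}$ pointing left) forces $p^{ped}_y<0$, while equation (\ref{right-relevant-vec}) enforces $v^{ped}_y\ge v^{ped}/\sqrt{2}>0$; symmetrically, if $\textbf{p}^{ped}\in\mathcal{S}^{rob}_{left}$ then $p^{ped}_y>0$ while (\ref{left-relevant-vec}) gives $v^{ped}_y\le -v^{ped}/\sqrt{2}<0$. In either case $p^{ped}_y\,v^{ped}_y<0$ strictly. This part is essentially by construction: the constraint sets were engineered precisely so that the pedestrian moves toward the robot's forward axis.

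For the $x$-term I would use the sensing-region definition in (\ref{cam}), which puts the pedestrian ahead of the robot with $p^{ped}_x\ge f>0$. Both (\ref{right-relevant-vec}) and (\ref{left-relevant-vec}) upper-bound the pedestrian's forward velocity component by $v^{ped}_x\le v^{ped}/\sqrt{2}$. Combining this with the assumption $v^{ped}\sim v^{rob}$ that appears immediately after the two constraint sets gives
\[
v^{ped}_x - v^{rob}\;\le\;\frac{v^{ped}}{\sqrt{2}} - v^{rob}\;<\;0,
\]
so $p^{ped}_x(v^{ped}_x - v^{rob})<0$. Summing the two strictly negative contributions yields the claim.

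The hard part will be the $x$-term, because it relies on the informal comparability assumption $v^{ped}\sim v^{rob}$. To make the argument rigorous I would replace it with the explicit quantitative hypothesis $v^{ped}<\sqrt{2}\,v^{rob}$, which is consistent with the pedestrian speed model in (\ref{ped-walking-speed}) under typical robot operating speeds; this should arguably be promoted to an assumption of the proposition. A short closing remark should also handle the measure-zero boundary case $p^{ped}_y=0$ (a pedestrian located exactly on the robot's forward axis), where the $y$-term vanishes but the $x$-term alone is already strictly negative and reduces the situation to a pure head-on approach.
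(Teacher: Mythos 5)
Your proof is correct, and it rests on the same underlying decomposition as the paper's (sign of the lateral term from the half-plane/orientation constraints, sign of the longitudinal term from the speed comparison), but the technical route differs in a way worth noting. The paper argues by finite differences: it fixes $v^{ped}=v^{rob}$, writes the relative position after one small step $\Delta t$, and concludes $d_f<d_0$ from $(v^{ped}_x-v^{rob})\Delta t<0$ and $\lvert p^{ped}_y+v^{ped}_y\Delta t\rvert<\lvert p^{ped}_y\rvert$, leaning on the informal ``$\Delta t$ is small'' to rule out overshoot, and it only treats $\mathcal{S}^{rob}_{right}$ explicitly. You instead differentiate the squared distance, which removes the overshoot issue entirely, handles both half-planes and the on-axis case $p^{ped}_y=0$ explicitly, and replaces the paper's equality assumption $v^{ped}=v^{rob}$ (stated loosely as $v^{ped}\sim v^{rob}$) with the weaker, explicit hypothesis $v^{ped}<\sqrt{2}\,v^{rob}$ needed to make the longitudinal term negative. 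What your version buys is a cleaner, strictly more general statement with the hidden assumption surfaced; what the paper's version buys is a direct match to its discrete, frame-by-frame prediction setting (positions updated every $\Delta t$ between camera frames). One small caveat applies to both arguments equally: computing the relative rate of change as $\textbf{v}^{ped}-\textbf{v}^{rob}$ in the robot frame implicitly assumes the robot is not rotating over the interval considered, which you may wish to state alongside your added hypothesis.
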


\begin{proof}
Consider $\textbf{p}^{ped} \in \mathcal{S}^{rob}_{right}$ with a velocity vector $[v^{ped}_{x}, v^{ped}_{y}]$ that satisfies Equation \ref{right-relevant-vec}. From Fig.\ref{freezing-scenario}(b), we observe that $p^{ped}_x$ is positive and $p^{ped}_y$ is negative. For simplicity, let us assume that for a time interval [$t_0, t_{fin}$], $v^{ped} = v^{rob}$. Let $t_{fin} - t_0 = \Delta t$ be a small time interval comparable to the time taken for the depth camera to capture two consecutive images. In this time interval, with respect to the robot, the pedestrian would have moved from $(p^{ped}_x, p^{ped}_y)$ to

\begin{equation}
    (p^{ped}_x + v^{ped}_{x}\Delta t - v^{rob}\Delta t, p^{ped}_y + v^{ped}_{y}\Delta t).
\end{equation}
\no Since $\Delta t$ is small, there cannot be any sudden changes in the pedestrian's motion. Since $v^{ped} = v^{rob}$, $v^{ped}_{x} \in [-v^{ped}/\sqrt{2}, v^{ped}/\sqrt{2}] \implies  v^{ped}_x < v^{rob}$ and $v^{ped}_{y} \in [v^{ped}/\sqrt{2}, v^{ped}] \implies v^{ped}_{y} > 0$. Then, the distance between the robot and the pedestrian at $t_0$ is given as,

\begin{equation}
    d_0 = \sqrt{(p^{ped}_x)^2 + (p^{ped}_y)^2}.
\end{equation}

\no The distance between the robot and the pedestrian at $t_f$ is given as,
\begin{equation}
    d_f = \sqrt{(p^{ped}_x + v^{ped}_x\Delta t - v^{rob}\Delta t)^2 + (p^{ped}_y + v^{ped}_y\Delta t)^2}.
\end{equation}

\no Since $v^{ped}_x\Delta t - v^{rob}\Delta t < 0$ and $\abs{p^{ped}_y + v^{ped}_y\Delta t} < \abs{p^{ped}_y}$, we get $d_f < d_0 \implies dist(\textbf{o}^{rob}, \textbf{p}^{ped})$ is a decreasing function in [$t_0, t_{fin}$]. A similar proof can be formulated for a pedestrian in $\mathcal{S}^{rob}_{left}$. This result implies that the pedestrian velocities satisfying conditions \ref{right-relevant-vec} and \ref{left-relevant-vec} move closer the robot, potentially causing freezing. Their motion would also be affected the most by the robot's navigation. Therefore, such pedestrians should be classified as \textit{potentially-freezing}. \\
\end{proof} 

\noindent {\bf Pedestrian Classification:} 
The \textit{potentially-freezing} velocity directions for each half of the sensing region are shown in Fig. \ref{freezing-scenario}(b). If $v^{ped} < v^{rob}$, then the pedestrian is considered as \textit{potentially-freezing}, irrespective of his/her orientation, as the distance between the robot and the pedestrian decreases over time. Pedestrians with velocity vectors of the form $[\pm v^{ped}, 0]$, are cases where the pedestrian is either moving head-on towards the robot or moving away from the robot (maybe at a lower speed than the robot). Such pedestrians are also considered potentially-freezing provided that the pedestrian position is along the X-axis of the robot. All other pedestrians are considered as \textit{non-freezing}. All \textit{potentially-freezing} pedestrians' positions are involved in \textit{freezing} zone computation.  

\subsection{Constructing the Potential Freezing Zone}
\textbf{Definition IV.2} \textit{(Potential Freezing Zone (PFZ)).} PFZ is defined as a conservative region with a high probability for the occurrence of FRP, after a time interval $\Delta t$. Using the current position and velocity of a \textit{potentially-freezing} pedestrian \textit{i}, we predict his/her's position $\hat{\textbf{p}}^{ped}_i$ after $\Delta t$ as, 

\begin{equation}
    \hat{\textbf{p}}^{ped}_i = \textbf{p}^{ped}_i + \textbf{v}^{ped}_i\Delta t \quad i \in {1,2,...,K,}
\end{equation}
\no where K is the number of \textit{potentially-freezing} pedestrians in the sensing zone. With these predicted positions as vertices, a closed region is constructed, as in (Fig. \ref{freezing-scenario}(c)),
\begin{equation}
    PFZ = Convex Hull (\hat{\textbf{p}}^{ped}_i), \quad i \in {1,2,...,K.}
\end{equation}


\no We use the convex hull of $\hat{\textbf{p}}^{ped}_i$ as a conservative approximation because it simplifies the computation of the freezing zone. It also simplifies the deviation angle computation explained in Section IV.E. The cases where K = 1 is a special case where the PFZ is a single point. To maintain sufficient distance from the pedestrian, we construct the PFZ as a circle centered around the pedestrian with a fixed radius. The avoidance of the PFZ locally guarantees the prevention of FRP after time $\Delta t$. In addition, it improves the robot's pedestrian-friendliness with respect to the K pedestrians, as the robot's deviation away from the PFZ ensures that it does not navigate obtrusively in-front of the pedestrians.

\subsection{Calculating Deviation Angle $\phi$}
At any instant, if $\hat{\textbf{p}}^{ped}_i$ of the closest \textit{potentially-freezing} pedestrian \textit{i} positioned at $\textbf{p}^{ped} = [p^{ped}_x, p^{ped}_y]$ satisfies, 

\begin{equation}
dist(\textbf{v}^{rob}\Delta t, \hat{\textbf{p}}^{ped}_i) \le \eta    
\end{equation}

\no ($\eta$ is the pedestrian comfort distance) and $\textbf{v}^{rob}\Delta t \in PFZ$, the robot initializes a deviation from its current velocity direction to a new velocity by an angle $\phi$ as, 
\begin{equation}
\textbf{v}^{rob}_{new} = R_{z, \phi}\textbf{v}^{rob},     
\end{equation}

\begin{figure}[t]
      \centering
      \includegraphics[width = 3.0in]{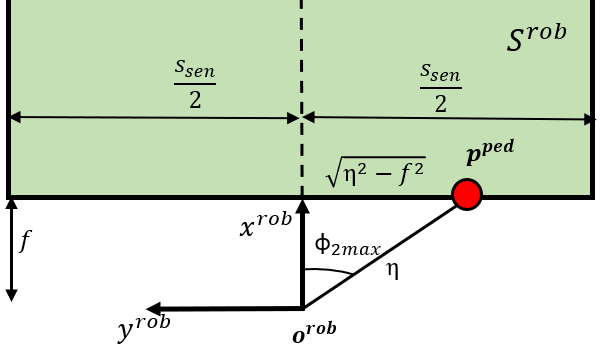}
      \caption {The maximum possible deviation $\phi_{2max}$ occurs, when the closest \textit{potentially-freezing} pedestrian is at $[f, s_{sen}/2]$. This proves that $\phi_2$ is always bounded, which implies that $\phi = min(\phi_1, \phi_2)$ is bounded.}
      \label{fig:max-deviation}
      \vspace{-15pt}
\end{figure}

\no where $R_{z, \phi}$ denotes the rotation matrix for an angle $\phi$ about the robot's Z-axis. This deviation is constrained based on the relative location of the robot's goal and avoids navigating in-front of the pedestrians. $\phi$ can be computed as,
\begin{equation}
    \phi = min(\phi_1, \phi_2),
\end{equation}
\no where $\phi_1$ and $\phi_2$ are given by,
\begin{gather}
    \phi_1 = \underset{R_{z,\phi_1}\textbf{v}^{rob}\Delta t \notin PFZ}{\operatorname{argmin}} \left(dist(R_{z,\phi_1}\textbf{v}^{rob}\Delta t, \textbf{g}^{rob})\right), \\
    \phi_2 = \taninv(p^{ped}_y / p^{ped}_x), \quad \phi_2 \ne 0.
\end{gather}

\no If $min(\phi_1, \phi_2) = \phi_2$, the robot deviates towards the closest pedestrian's current position. Since the pedestrian has a non-zero velocity in the interval $\Delta t$, the robot avoids the closest pedestrian from behind. If $\phi_2 = 0$, we use $\phi = \phi_1$, which denotes the least deviation from the goal. 
\begin{prop}
The deviation angle $\phi$ has an upper bound of $\taninv(\pm \sqrt{\eta^2 - f^2} / f)$ which depends on $\eta$ and f.  
\end{prop}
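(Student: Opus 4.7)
The plan is to exploit the fact that $\phi = \min(\phi_1, \phi_2)$, so it suffices to upper-bound $|\phi_2|$ to get an upper bound on $|\phi|$; if that bound happens to be tight for $\phi_2$, then the overall bound follows. Since $\phi_2 = \tan^{-1}(p^{ped}_y / p^{ped}_x)$ and $\tan^{-1}$ is monotone in its argument, the task reduces to maximizing $|p^{ped}_y| / p^{ped}_x$ over all pedestrian locations $\textbf{p}^{ped}$ that can actually trigger a deviation.

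First I would write down the two geometric constraints on $\textbf{p}^{ped}$ at the instant the deviation fires. (i) The sensing region $\mathcal{S}^{rob}$ is offset by $f$ meters in front of the robot, so any tracked pedestrian satisfies $p^{ped}_x \ge f$. (ii) The deviation is only invoked when the closest potentially-freezing pedestrian is within the comfort distance, i.e.\ $\text{dist}(\textbf{v}^{rob}\Delta t, \hat{\textbf{p}}^{ped}_i) \le \eta$; dropping the small $\Delta t$ translation terms (or, more carefully, absorbing them into $\eta$), this reduces to $(p^{ped}_x)^2 + (p^{ped}_y)^2 \le \eta^2$.

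Next I would combine (i) and (ii): from the disk inequality, $|p^{ped}_y| \le \sqrt{\eta^2 - (p^{ped}_x)^2} \le \sqrt{\eta^2 - f^2}$, with equality achievable only when $p^{ped}_x = f$. Plugging into $\phi_2$, $|\phi_2| \le \tan^{-1}\!\bigl(\sqrt{\eta^2 - f^2}/f\bigr)$, where the sign corresponds to whether the triggering pedestrian lies in $\mathcal{S}^{rob}_{right}$ ($p^{ped}_y<0$) or $\mathcal{S}^{rob}_{left}$ ($p^{ped}_y>0$). Since $|\phi| \le |\phi_2|$, this bound transfers directly to $\phi$, yielding the claimed expression $\tan^{-1}(\pm\sqrt{\eta^2-f^2}/f)$. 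I would then close by noting that the extremal configuration matches Fig.~\ref{fig:max-deviation}, where the pedestrian sits on the near edge of the sensing region at the farthest lateral offset still consistent with the comfort disk.

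The main obstacle I anticipate is step (ii): the statement in the text uses $\text{dist}(\textbf{v}^{rob}\Delta t, \hat{\textbf{p}}^{ped}_i)$ rather than $\text{dist}(\textbf{o}^{rob}, \textbf{p}^{ped}_i)$, so collapsing the predicted-position inequality back to a clean $\|\textbf{p}^{ped}\| \le \eta$ statement requires either (a) assuming $\Delta t$ is small enough that the translations are negligible, or (b) redefining $\eta$ to be the effective comfort radius after the prediction step. The second, more subtle point is that the figure caption singles out $[f,\,s_{sen}/2]$ as the worst case, which is only consistent with the claimed bound if $s_{sen}/2 = \sqrt{\eta^2 - f^2}$ (or if the circle is the active constraint); I would state this assumption explicitly so that the sensing-square and comfort-disk constraints coincide at the extremum.
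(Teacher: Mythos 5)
Your proposal is correct and follows essentially the same route as the paper: bound $\phi_2=\taninv(p^{ped}_y/p^{ped}_x)$ by combining the sensing-offset constraint $p^{ped}_x\ge f$ with the comfort-distance constraint of radius $\eta$, locate the extremum at $p^{ped}_x=f$, $p^{ped}_y=\pm\sqrt{\eta^2-f^2}$, and then transfer the bound to $\phi$ via $\phi=\min(\phi_1,\phi_2)$. Your explicit handling of the $\Delta t$ terms and of the disk-versus-square extremal configuration only makes precise assumptions the paper leaves implicit, so no gap is introduced.
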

\begin{proof}
Consider the formulation for $\phi_2$. For the dimensions of the sensing region shown in Fig. \ref{freezing-scenario}(a), the maximum value of the deviation angle for a certain $\eta$ could occur if the closest pedestrian is located at $\textbf{p}^{ped}$ with ${p}_x^{ped}$ = \textit{f} meter (just within the sensing region), and ${p}_y^{ped} = \pm \sqrt{\eta^2 - f^2}$ (see Fig. \ref{fig:max-deviation}). This is because the $\taninv(p^{ped}_y / {p}^{ped}_x)$ function increases as $p^{ped}_x$ decreases, and \textit{f} is the least possible value that $p_x^{ped}$ can have. Then, the maximum value of $\phi_2$, $\phi_{2max} = \taninv(\pm \sqrt{\eta^2 - f^2} / f)$. $\phi_2$ is bounded by $\taninv(\pm \sqrt{\eta^2 - f^2} / f) \implies \phi$ is bounded by $\taninv(\pm \sqrt{\eta^2 - f^2} / f)$.
\end{proof}

\no The constraints on the deviation angle ensure that (i) PFZ can be avoided which reduces freezing and improves pedestrian-friendliness; (ii) since $\phi$ is the least possible deviation away from the PFZ, the angular motion of the robot is restricted. This results in minimizing the loss of line of sight of the obstacles that are surrounding the robot. We note that our deviation angle computation does not assume pedestrian cooperation for collision avoidance. This formulation is applicable in moderate to dense crowds ($\le$ 1 person/ $m^2$),  where human cooperation is not required to avoid freezing. 


\subsection{Frozone and Deep Reinforcement Learning}
We use a hybrid combination of a DRL-based collision avoidance policy by Long et al \cite{JiaPan1} and Frozone to compute the robot's collision-free velocities. The overall system architecture is shown in Fig. \ref{system-arch}. Frozone modifies the velocities computed by the DRL policy to preemptively avoid potential freezing zones in crowds with lower pedestrian densities ($\le$ 1 person/$m^2$). In dense crowds, the DRL-based policy demonstrates good collision avoidance capability, and the computed velocities are used directly by our hybrid method. 

The crowd density in the environment is categorized based on the total number of pedestrians in the robot's sensing region T. Our switching mechanism for different densities can be expressed as:
\begin{equation}
    \textbf{v}^{rob} = 
    \begin{cases}
        \textbf{v}^{Frozone} \quad \text{if} \,\, T \le s_{sen}^2, \\
        \textbf{v}^{DRL} \qquad \text{if} \, \, T > s_{sen}^2.
    \end{cases}
\end{equation}
\no $T \le s_{sen}^2$ corresponds to a density of $\le 1 person/m^2$. We also note that Frozone is not restricted to be interfaced with a DRL-based method and can be employed with any collision avoidance scheme.




\begin{figure}[t]
      \centering
      \includegraphics[width = \columnwidth, height=1.8in]{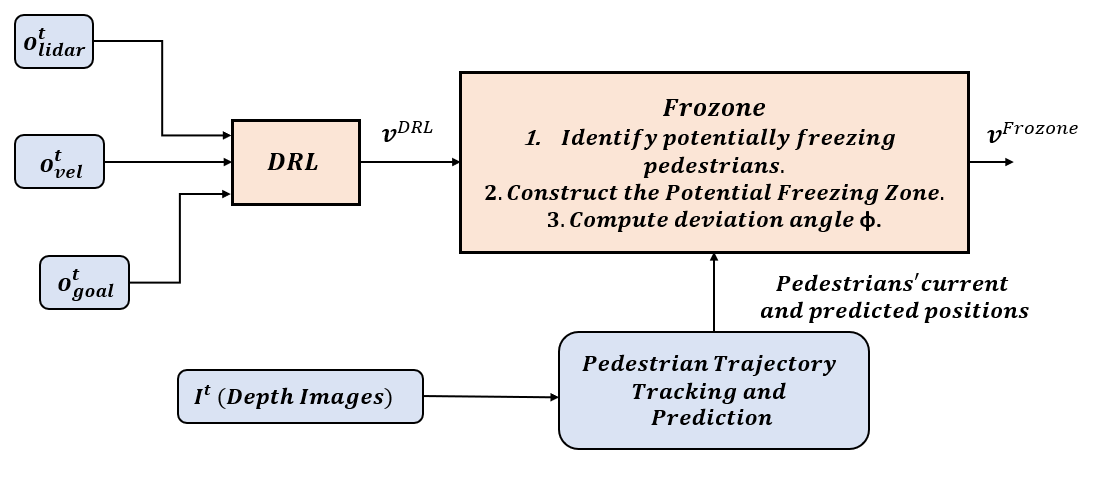}
      \caption {Our hybrid method's system architecture, which uses the velocity provided by a DRL method \cite{JiaPan1} as a guiding velocity for collision avoidance and modifies it to preemptively avoid Potential Freezing Zones (PFZ). Our method includes 3 main components: 1. Classifying pedestrians as potentially-freezing and non-freezing; 2. Constructing a \textit{Potential} Freezing Zone; 3. Avoiding the PFZ in a pedestrian-friendly manner.}
      \label{system-arch}
      \vspace{-15pt}
\end{figure}

\section{Results and Analysis}
In this section, we describe our implementation and highlight its performance in different evaluation scenarios. We also compare our method with prior methods and show significant improvements in terms of navigation performance. 

\subsection{Implementation}
We first evaluate our method in simulations that were created using ROS Kinetic and Gazebo 8.6. We use a simulated model of a Clearpath Jackal robot attached with models of the Hokuyo 2-D lidar and the Orbbec Astra depth camera in Gazebo. The Hokuyo lidar, used by the DRL-based method, has a range of $4$ meters, an FOV of 240$^\circ$, and provides $512$ range values per scan. The Astra camera has a minimum and maximum sensing range of $0.5$ meters and $5$ meters, respectively. We use images of size $ w \times h = 150 \times 120$ with added Gaussian noise $\mathcal{N}(0,0.2)$ as inputs to our pedestrian detection and tracking system. For our real-world implementation, we mount use different cameras; the Astra camera with a Turtlebot 2 and an Intel Realsense depth camera with a Clearpath Jackal robot .


\subsection{Testing Scenarios}
We recreate typical indoor and outdoor scenarios that the robot could face in our simulator to evaluate the performance of our algorithm and to compare with prior approaches. For sparse crowds, we assume that the robot takes full responsibility to avoid the collisions. As a result, we make no assumptions about human cooperation. We consider the following challenging scenarios to evaluate the performance:

\no \textbf{Corridor:} The robot must navigate through 15 pedestrians in a straight corridor to reach its goal. The scenario is made more challenging by making the pedestrians walk in multiple pairs or in a zig-zag manner.

\no \textbf{Crossing:} The robot must move perpendicular to the pedestrian motion in a plus-shaped corridor. The pedestrians may not be sensed until they are very close to the robot. 

\no \textbf{Random-5:} The robot must navigate in a random manner and move through five pedestrians that result in high pedestrian density in a local region. The maximum pedestrian density in this scenario is 1 person / $m^2$.

\no \textbf{Random-10:} The robot navigates through randomly moving 10 pedestrians. The maximum density however, is $<$ 0.75 person / $m^2$. This scenario is used to evaluate the maximum number of pedestrians that each method can handle at any time instant. 

\no \textbf{1 Pedestrian Head-on:} To compare the reduction in the occurrence of freezing and increase in pedestrian-friendliness, we make the robot approach a single pedestrian head-on from different initial positions that are at a distance of $3$ meters and $4$ meters. The pedestrian moves at $1$ m/s towards the robot and halts in-front of the robot, emulating a freezing scenario in dense crowds. This tests the collision avoidance response time of each algorithm, and if the algorithm can avoid freezing. 

\no \textbf{1 pedestrian Perpendicular:} The robot moves perpendicular to a pedestrian's motion. We evaluate if the robot avoids the pedestrian from front (obtrusive) or back (unobtrusive). 

We use the above mentioned single pedestrian scenarios for our evaluating freezing rates and pedestrian-friendliness, as it provides a more precise way to measure these parameters and observe robot behaviors when compared to more dense scenes.

\subsection{Evaluation Metrics}
We highlight the various metrics used to evaluate the algorithm. The mean time and velocity are self-explanatory and correspond to the values when the robot reached its goal position without a collision. 

\textbf{Success Rate} - The number of times that the robot reached its goal without collisions over the total number of attempts.

\textbf{Freezing Rate} - The number of times the robot got stuck or started oscillating for more than 10 seconds, while avoiding obstacles over the total number of attempts.

\textbf{Pedestrian friendliness} - We use Equation \ref{eq:PF} to compute the pedestrian friendliness metric. We use $N_{\infty} = 10$ in our results. 

\subsection{Analysis}
Table \ref{Tab:Results} shows the results of our comparisons between Frozone + Long et al.~\cite{JiaPan1} DRL method (hybrid) and three previous methods in our simulator: (i) Dynamic Window Approach \cite{DWA}, a traditional collision avoidance method which uses the lidar to sense nearby obstacles and forward-simulate the robot's motion to detect potential collisions, (ii) Long et al.~\cite{JiaPan1}, a DRL-based collision avoidance method for dense crowd collision avoidance; (iii) DenseCAvoid \cite{densecavoid}, a DRL-based method with a trained policy that reduces FRP using pedestrian tracking and prediction. As mentioned before, our implementation of Frozone is combined with Long et al's DRL policy \cite{JiaPan1} for our evaluations.

We observe that Frozone + Long et al's method has the best success rates of all the methods in all the scenarios highlighted above. It significantly improves the success rates over just using Long et al.'s~\cite{JiaPan1} algorithm, and performs better or comparably with DWA. This is because a robot's success rate is closely tied to avoiding freezing as well. Our hybrid method improves the mean time to goal when compared to Long et al.'s method by up to 33\%, and the robot's average velocity increases up to 40 \%, while resulting in comparable performances with the other methods.

\textbf{Freezing and Pedestrian-Friendliness:} Frozone significantly reduces the freezing in all our challenging test scenarios. In the scenario where a pedestrian approaches the robot head-on starting from 3 meters away, all previous methods halted to avoid a collision and started oscillating for more than 15 seconds. Frozone + Long et al's method avoids the circular PFZ constructed around the pedestrian preemptively and prevents freezing. When the pedestrian starts from 4 meters away, DWA and DenseCAvoid manage to not freeze in some cases, since the robot has more space and time to react to the pedestrian, and avoid it. Both these methods however, do not handle dynamic pedestrians in close proximity ($<$ 2 meters away).

Frozone also deviates the robot to avoid pedestrians from behind whenever it results in least deviation from the goal. When the robot moves perpendicular to the pedestrian's motion, all previous methods take short-sighted actions and try to pass the pedestrian from the front. In some cases, DWA deviated more than 5 meters away from the goal to avoid the pedestrian. Such behavior severely affects both the pedestrian-friendliness and the efficiency of the navigation. In all our trials, Frozone + Long et al's method passed the pedestrian from behind, without changing the perceived space in-front of the pedestrian. 

\begin{figure}[t]
      \centering
      \includegraphics[width=\columnwidth]{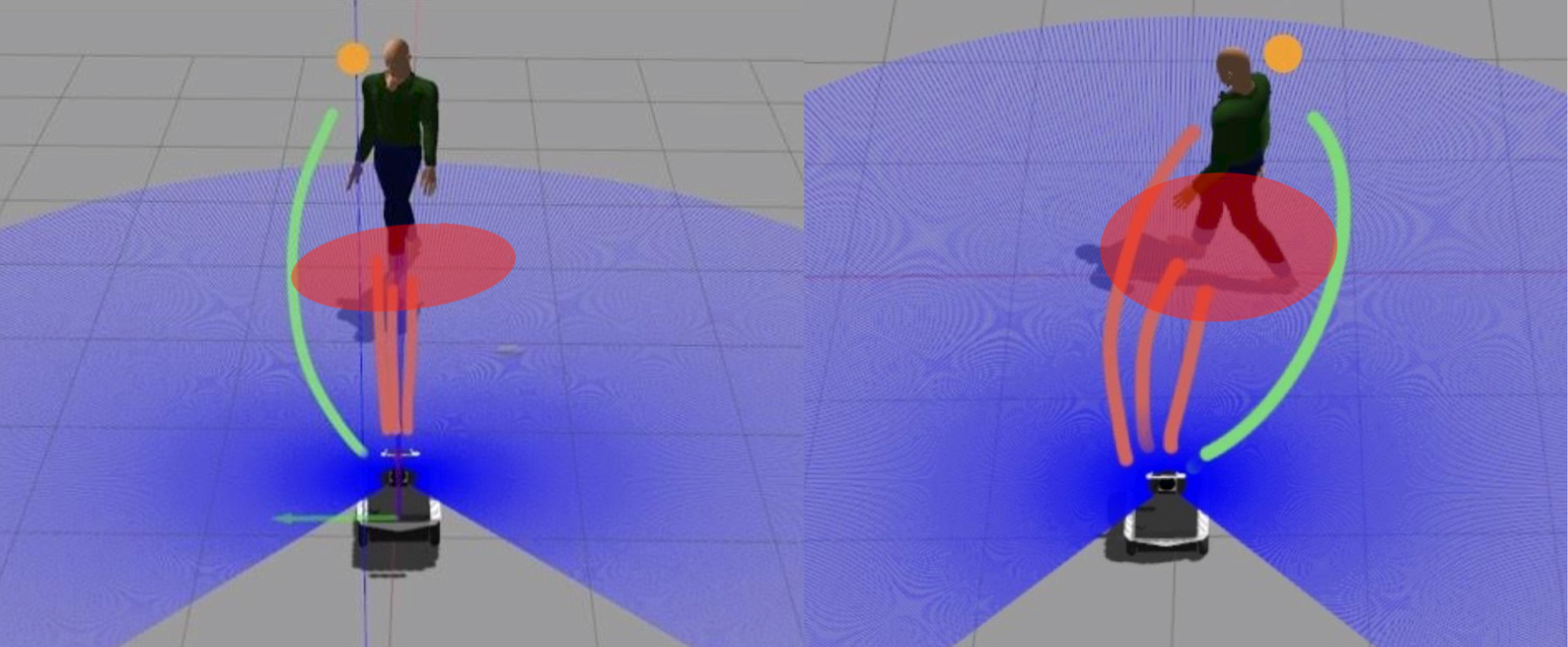}
      \caption {The trajectories of our Frozone + Long et al's method hybrid (in green) when compared with the trajectories of previous methods (red trajectories). Our method always avoids the PFZ (red circular region constructed around the pedestrian) and thus avoids freezing. Our method also avoids the pedestrian from behind, thereby not affecting the pedestrian's motion. The yellow point represents the robot's goal.}
      \label{fig:results}
      \vspace{-15pt}
\end{figure}

In our real-world tests on two differential drive robots, we observed that the robot could switch between the velocities computed by Long et al's method, and Frozone seamlessly. The performance in real-world scenarios is also simplified as humans generally cooperate with avoiding collisions with the robots. We also test the pedestrian-friendliness improvement in real-world scenes and observe that the robot conservatively navigates behind a pedestrian whenever possible (please see submitted video). 

\begin{table}[!htb]
\caption{ \small{\label{Tab:Results} Comparison of the hybrid combination of Frozone + Long et al's method's performance with other learning methods \cite{JiaPan1,densecavoid}, and a traditional collision avoidance method DWA \cite{DWA} in challenging scenarios. We observe that our hybrid combination's performance is better or comparable to previous methods. We represent our hybrid combination as Frozone + DRL in the table.}}
 \resizebox{\columnwidth}{!}{%
\begin{tabular}{|c|c|c|c|c|c|} 
\hline
\textbf{Metrics}\Tstrut  & \textbf{Method} & \textbf{Corridor} & \textbf{Crossing} & \textbf{Random-5} & \textbf{Random-10}\\ [0.5ex] 
\hline
\multirow{4}{*}{\rotatebox[origin=c]{0}{\makecell{\textbf{Success}\\\textbf{Rate}\\(higher\\better)}}} & DWA \cite{DWA}\Tstrut & 0.4 & 0.6 & 0.6 & 0.8 \\
 & Long et al. \cite{JiaPan1} & 0.1 & 0.4 & 0.6 & 0.2 \\
 & DenseCAvoid \cite{densecavoid} & 0.2 & 0.6 & 0.7 & 0.4 \\
 & Frozone + DRL & 0.6 & 0.8 & 0.6 & 0.8 \\
\hline
\multirow{4}{*}{\rotatebox[origin=c]{0}{\makecell{\textbf{Mean}\\\textbf{Time}\\(lower\\better)}}} & DWA \cite{DWA}\Tstrut & 32.9 & 31.9 & 33.25 & 32.71 \\
 & Long et al. \cite{JiaPan1} & 35.4 & 39.2 & 31.3 & 33.8 \\
 & DenseCAvoid \cite{densecavoid} & 32.03 & 39.08 & 32.77 & 30.04 \\
 & Frozone +DRL  & 31.3 & 29.06 & 32.8 & 32.2 \\
\hline
\multirow{4}{*}{\rotatebox[origin=c]{0}{\makecell{\textbf{Avg}\\\textbf{Velocity}\\(higher\\better)}}} & DWA \cite{DWA}\Tstrut & 0.30 & 0.31 & 0.31 & 0.30 \\
 & Long et al. \cite{JiaPan1}  & 0.28 & 0.25 & 0.30 & 0.33 \\
 & DenseCAvoid \cite{densecavoid}
 & 0.33 & 0.25 & 0.31 & 0.33 \\
 & Frozone + DRL & 0.34 & 0.35 & 0.33 & 0.33 \\
\hline
\end{tabular}
}
\end{table}

\begin{table}[!htb]
\caption{ \small{\label{Tab:Results2} Comparison of different methods in terms of freezing rates and pedestrian-friendliness (PF) for challenging 1-person scenarios, when the pedestrian starts at close proximity from the robot and moves at 1 m/s. Our Frozone + Long et al's method hybrid (represented as Frozone + DRL) outperforms all previous methods, with a significant decrease in freezing and increase in pedestrian-friendliness in all scenarios.}}
\resizebox{\columnwidth}{!}{
\begin{tabular}{|c|c|c|c|c|c|} 
\hline
\textbf{Metrics}\Tstrut \Tstrut & \textbf{Method} & \textbf{1Ped-3m} & \textbf{1Ped-4m} & \textbf{Ped-perp-3m} & \textbf{Ped-perp-4m} \\ [0.5ex] 
\hline
\multirow{4}{*}{\rotatebox[origin=c]{0}{\makecell{\textbf{Freezing}\\\textbf{Rate}\\(lower\\better)}}} & DWA \cite{DWA} \Tstrut & 100 & 87 & 11 &  0\\
 & Long et al. \cite{JiaPan1} & 100 & 100 & 23 & 11\\
 & DenseCAvoid \cite{densecavoid} & 100 & 81 & 5 & 0\\
 & Frozone + DRL & 0 & 0 & 0 & 0\\
\hline

\multirow{6}{*}{\rotatebox[origin=c]{0}{\makecell{\textbf{PF}\\(higher\\better)}}}  &  &  &  &  &  \\
& DWA \cite{DWA}  & 0.0 & 0.35 & 1.02 & 1.5 \\
 & Long et al. \cite{JiaPan1}  & 0.0 & 0.0 & 0.57 & 0.95\\
 & DenseCAvoid \cite{densecavoid} & 0.0 & 0.24 & 0.56 & 1.32\\
 & Frozone + DRL & 0.36 & 0.52 & 10 & 10\\
  &  &  &  &  &  \\
\hline
\end{tabular}
}
\end{table}

\section{Conclusions, Limitations and Future Work}
We present  a novel method to significantly reduce the occurrence of the Freezing Robot Problem when a robot  navigates through moderately dense crowds. Our approach is general and uses a standard camera for pedestrian detection and tracking. We present a simple algorithm to construct a spatial zone, where the robot could freeze and be obtrusive to the pedestrians. We use this formulation of potentially freezing zone to compute an angle to deviate the robot away from the zone. We also combined our method with a DRL-based collision avoidance method to exploit its advantages in dense crowds ($>$1-2 persons/$meter^2$), while reducing the occurrence of freezing. We observe that our method outperforms existing collision avoidance methods by having lower freezing rates, higher pedestrian-friendliness and success rates of reaching the goal. We highlight the performance on two different robots.

Our approach has certain limitations. While our method can reduce the rate of freezing, but we cannot avoid it altogether without human cooperation. Our formulation of potentially freezing zone is conservative and we use a locally optimal technique to compute the deviation. The behavior of our approach is also governed by the underlying pedestrian tracking algorithm as well as the techniques used to model pedestrian friendliness. Our hybrid method's performnce is also governed by the DRL formulation. As part of future work, we would like to overcome these limitations and test our approach with other robots. We would also like to take into account the dynamics constraints of the robot.


\bibliographystyle{IEEEtran}
\bibliography{References}
\end{document}